\def\ps@pprintTitle{%
   \let\@oddhead\@empty
   \let\@evenhead\@empty
   \let\@oddfoot\@empty
   \let\@evenfoot\@oddfoot
}
\begin{document}
\title{QPSO-CD: Quantum-behaved Particle Swarm Optimization Algorithm with Cauchy Distribution}

\author{Amandeep Singh Bhatia$^{1,3,^*}$, Mandeep Kaur Saggi$^{2}$, Shenggen Zheng$^{3}$, Soumya Ranjan Nayak$^{4}$\\
$^{1}$\textit{Chitkara University Institute of Engineering \& Technology, Chitkara University, Punjab, India}\\
$^{2}$\textit{Department of Computer Science, Thapar Institute of Engineering \& Technology, India} \\
$^{3}$\textit{Center for Quantum Computing, Peng Cheng Laboratory, Shenzhen, China} \\
$^{4}$\textit{Amity School Of Engineering \& Technology, Amity University Uttar Pradesh, Noida, India} \\
E-mail: amandeepbhatia.singh@gmail.com$^{1, ^*}$}

\begin{abstract}
Motivated by the particle swarm optimization (PSO) and quantum computing theory, we have presented a quantum variant of PSO (QPSO) mutated with Cauchy operator and natural selection mechanism (QPSO-CD) from evolutionary computations. The performance of proposed hybrid quantum-behaved particle swarm optimization with Cauchy distribution (QPSO-CD) is investigated and compared with its counterparts based on a set of benchmark problems. Moreover, QPSO-CD is employed in well-studied constrained engineering problems to investigate its applicability. Further, the correctness and time complexity of QPSO-CD are analysed and compared with the classical PSO. 
It has been proved that QPSO-CD handles such real-life problems efficiently and can attain superior solutions in most of the problems.  The experimental results shown that QPSO associated with Cauchy distribution and natural selection strategy outperforms other variants in context of stability and convergence. \\

\textbf{Keywords}: Quantum-behaved algorithm, Particle swarm optimization, Engineering design problems, Cauchy distribution, Quantum computing.
\end{abstract}

\maketitle



\newcommand{\myqed}{\rule{2pt}{1em}}
\newenvironment{myproof}{\begin{proof}}{\let\qedsymbol\myqed\end{proof}}
\theoremstyle{plain}
\newtheorem{theorem}{Theorem}
\theoremstyle{definition}
\newtheorem{defn}{Definition}
\newtheorem{exmp}{Example}[section]

\section{Introduction}
In the late 19th century, the theory of classical mechanics experienced several issues in reporting the physical phenomena of light masses and high velocity microscopic particles. In 1920s, Bohr’s atomic theory \cite{bohr1928quantum}, Heisenberg’s discovery of quantum mechanics \cite{robertson1929uncertainty} and Schrödinger’s \cite{wessels1979schrodinger} discovery of wave mechanics influence the conception of a new field i.e. the quantum mechanics. In 1982, Feynman \cite{feynman1982simulating} stated that quantum mechanical systems can be simulated by quantum computers in exponential time, i.e. better than with classical
computers. Till then, the concept of quantum computing was thought to be only a theoretical possibility, but over the last three decades the research has evolved such as to make quantum computing applications a realistic possibility \cite{wang2012handbook}.

In the last two decades, the field of swarm intelligence has got overwhelming response
among research communities. It is inspired by nature and aims to build decentralized and self-organized systems by collective behavior of individual agents with each other and with their environment. The research foundation of swarm intelligence is constructed mostly upon two families of optimization algorithms i.e. ant colony optimization   (Dorigo et at. \cite{dorigo1999ant} and Colorni et al. \cite{colorni1992distributed} 1992); and particle swarm optimization (PSO) (Kennedy \& Eberhart \cite{kennedy1995particle} 1995). Originally, the swarm intelligence is inspired by certain natural behaviors of flocks of birds and swarms of ants.

In the mid 1990s, particle swarm optimization technique  was introduced for continuous optimization, motivated by flocking of birds. The evolution of PSO based bio-inspired techniques has been in an expedite development in the last two decades. It has got
attention from different fields such as inventory planning \cite{wang2014modified}, power systems \cite{alrashidi2009survey}, manufacturing       \cite{yildiz2009novel}, communication networks \cite{latiff2007energy}, support vector machines \cite{lin2008particle}, to estimate binary inspiral signal \cite{wang2010particle}, gravitational waves \cite{normandin2018particle} and many more. Similar to evolutionary genetic algorithm, it is inspired by simulation of social behavior, where each individual is called particle, and  group of individuals is called swarm. In multi-dimensional search space, the position and velocity of each particle represent a probable solution. Particles fly around in a search space seeking potential solution. At each iteration, each particle adjusts its position according to the goal of its own and its neighbors. Each particle in a neighborhood share the information with others \cite{sun2004particle}. Later, each particle keeps the record of best solution experienced so far to update their positions and adjust their velocities accordingly.

	\begin{figure*}[!ht]
	\centering
	\includegraphics[scale=1.25]{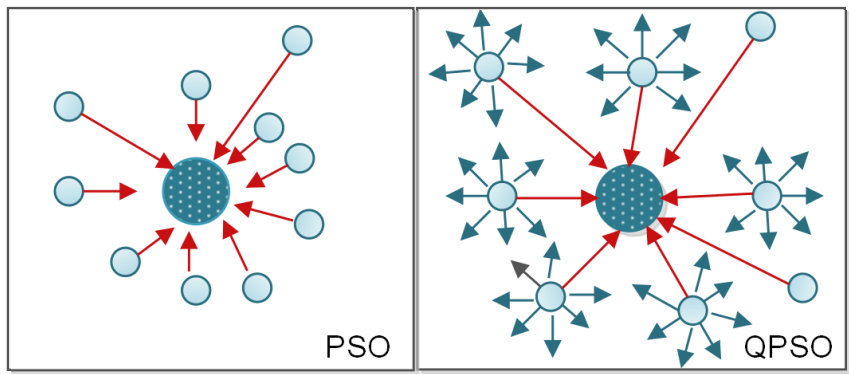}
	\caption{Particles movement in PSO and QPSO algorithm}
\end{figure*}

Since the first PSO algorithm proposed, the several PSO algorithms have been introduced with plethora of alterations.  Recently, the combination of quantum computing, mathematics and computer science have inspired the creation of optimization techniques. Initially,  Narayanan and Moore \cite{narayanan1996quantum} introduced quantum-inspired genetic algorithm (QGA) in 1995. Later, Sun et al. \cite{sun2004particle} applied the quantum laws of mechanics to PSO and proposed quantum-inspired particle swarm optimization (QPSO). It is the commencement of quantum-behaved optimization algorithms, which has subsequently made a significant impact on the academic and research communities alike.

Recently, Yuanyuan and Xiyu \cite{yuanyuan2018quantum} proposed a quantum evolutionary algorithm to discover communities in complex social networks. Its applicability is tested on five real social networks and results are compared with classical algorithms.  It has been proved that PSO lacks convergence on local optima i.e. it is tough for PSO to come out of the local optimum once it confines into optimal local region. QPSO with mutation operator (QPSO-MO) is proposed to enhance the diversity to escape from local optimum in search \cite{liu2005quantum}. Protopopescu  and Barhen \cite{protopopescu2002solving} solved set of global optimization problems efficiently using quantum algorithms. In future, the proposed algorithm can be integrated with matrix product state based quantum classifier  for supervised learning \cite{40, 44}.

In this paper, we have combined QPSO with Cauchy mutation operator to add long jump ability for global search and natural selection mechanism for elimination of particles. The results shown that it has great tendency to overcome the problem of trapping into local search space. Therefore, the proposed hybrid QPSO strengthened the local and global search ability and outperformed the other variants of QPSO and PSO due to fast convergence feature.

The illustration of particles movement in PSO and QPSO algorithm is shown in Fig 1. The big circle at center denotes the particle with the global position and other circles are particles. The particles located away from global position are lagged particles.  The blue color arrows signify the directions of other particles and the big red arrows point towards the side in which it goes with high probability. During iterations, if the lagged particle is unable to find better position as compared to present global position in PSO, then their impact is null on the other particles. But, in QPSO, the lagged particles move with higher probability in the direction of gbest position. Thus, the contribution of lagged particles is more to the solution in QPSO in comparison with PSO algorithm.

The organization of rest of this paper is as follows: Sect. 2 is devoted to prior work. In Sect. 3, the quantum particle swarm optimization is described. In
Sect. 4, the proposed hybrid QPSO algorithm with Cauchy distribution and natural selection mechanism  is presented. The experimental results are
plotted for a set of benchmark problems and compared with several QPSO variants in Sect. 5. The correctness and time complexity are analyzed in Section 6. QPSO-CD is applied to three constrained engineering design problems in Sect. 7.
Finally, Sect. 8 is the conclusion.

\section{Prior Work}
Since the quantum-behaved particle swarm optimization was proposed, various revised variants have been emerged. Initially, Sun et al. \cite{sun2004particle} applied the concept of quantum computing to PSO and developed a quantum Delta potential well model for classical PSO \cite{sun2004global}. It has been shown that the convergence and performance of QPSO are superior as compared to classical PSO. The selection and control of parameters can improve its performance, which is posed as an open problem. Sun et al. \cite{sun2007using} tested the performance of QPSO on constrained and unconstrained problems. It has been claimed that QPSO is a promising optimization algorithm, which performs better than classical PSO algorithms. In 2011, Sun et al. \cite{sun2011quantum} proposed QPSO with Gaussian distribution (GAQPSO) with the local attenuator point and compared its results with several PSO and QPSO counterparts. It has been proved that GAQPSO is efficient and stable with superior features in quality and robustness of solutions.

Further, Coelho \cite{dos2010gaussian} applied GQPSO to constrained engineering problems and shown that the simulation results of GQPSO are much closer to the perfect solution with small standard deviation. Li et al. \cite{li2012improved} presented a cooperative QPSO using Monte Carlo method (CQPSO), where particles cooperate with each other to enhance the performance of original algorithm. It is implemented on several representative functions and performed better than the other QPSO algorithms in context of computational cost and quality of solutions. Peng et al. introduced \cite{peng2013quantum} QPSO with Levy probability distribution and claimed that there are very less chances to  be stuck in local optimum.

Researchers have applied PSO and QPSO to real-life problems and achieved optimal solutions as compared to existing algorithms. Ali  et al. \cite{90} performed energy-efficient clustering in mobile ad-hoc networks (MANET) with PSO. The similar approach can be followed to analyse and execute mobility over MANET with QPSO-CD \cite{88}. Zhisheng \cite{zhisheng2010quantum} used QPSO in economic load dispatch for power system and proved superior to other existing PSO optimization algorithms. Sun et al. \cite{sun2006qpso} applied QPSO for QoS multicast routing. Firstly, the QoS multicast routing is converted into constrained integer problems and then effectively solved by QPSO with loop deletion task. Further, the performance is investigated on random network topologies. It has been proved that QPSO is more powerful than PSO and genetic algorithm.
Geis and Middendorf proposed PSO with Helix structure for finding ribonucleic acid (RNA) secondary structures with same structure and low energy \cite{geis2011particle}. The QPSO-CD algorithm can be used with two-way quantum finite automata to model the RNA secondary structures \cite{bhatia2018modeling}. 
 Bagheri et al. \cite{bagheri2014financial} applied the QPSO for tuning the parameters of adaptive network-based fuzzy inference system (ANFIS) for forecasting the financial prices of future market. Davoodi et al. \cite{davoodi2014hybrid} introduced a hybrid improved QPSO with Neldar Mead simplex method (IQPSO-NM), where NM method is used for tuning purpose of solutions. Further, the proposed 
algorithm is applied to solve load flow problems of power system and acquired the convergence accurately with efficient search ability. Omkar \cite{omkar2009quantum} proposed QPSO for multi-objective design problems and results are compared with PSO. Recently, Fatemeh et al. \cite{fatemeh2019shuffled} proposed QPSO with shuffled complex evolution (SP-QPSO) and its performance is demonstrated using five engineering design problems. Prithi and Sumathi \cite{prithi2020ld2fa} integrated the concept of classical PSO with deterministic finite automata for transmission of data and intrusion detection. The proposed algorithm QPSO-CD can be used with quantum computational models for wireless communication \cite{10, 20, 30}.

\section{Quantum Particle Swarm Optimization}
Before we explain our hybrid QPSO-CD algorithm mutated with Cauchy operator and natural selection method, it is useful to define the notion of quantum PSO. We assume that the reader is familiar with the concept of classical PSO; otherwise, reader can refer to particle swarm optimization algorithm \cite{kennedy2010particle, shi2001particle}. The specific principle of quantum PSO is given as:

In QPSO, the state of a particle can be represented using wave function $\psi(x, t)$. The probability density function $|\psi(x, t)|^2$ is used to determine the probability of particle occurring in  position \textit{x} at any time \textit{t} \cite{sun2004particle, sun2006qpso}. The position of particles is  updated according to equations:

\begin{equation}
x_{i, j}(t+1)=p_{i, j} (t) \pm \alpha . |mbest_{i, j}(t)-x_{i, j}(t)|. ln(1/u) 
\end{equation}
\begin{equation}
p_{i, j}(t)=  (\phi.P_{i, j}(t)+(1-\phi).G_j(t)),  (1 \leq i \leq  N, 1 \leq j \leq M)
\end{equation}
where each particle must converge to its local attractor $p=(p_1, p_2,..., p_D)$, where \textit{D} is the dimension, \textit{N} and \textit{M} are the number of particles and iterations respectively, $P_{i, j}$ and $G_{j}$ denote the previous and optimal position vector of each particle respectively, $\phi=c_1.r_1/(c_1r_1+c_2r_2)$, where $c_1$; $c_2$ are the acceleration coefficients, $r_1$; $r_2$ and \textit{u} are normally distributed random numbers in (0, 1), $\alpha$ is contraction-expansion coefficient and mbest defines the mean of best positions of particles as:
\begin{equation}
mbest_{i, j}(t)=\dfrac{1}{N} \sum_{i=1, j=1}^{N, M}P_{i, j}(t)=\bigg(\dfrac{1}{N}\sum_{i=1}^{N}P_{i, 1}(t), \dfrac{1}{N}\sum_{i=1}^{N}P_{i, 2}(t), . . ., \dfrac{1}{N}\sum_{i=1}^{N}P_{i, D}(t)  \bigg) 
\end{equation}
In Eq. (1), $\alpha$ denotes contraction-expansion coefficient, which is setup manually to control the speed of convergence. It can be decreased linearly or fixed. In PSO, $\alpha < 1.782$ to ensure convergence performance of the particle. In QPSO-CD, the value of $\alpha$ is determined  by $\alpha$=1-(1.0-0.5)\textit{k}/\textit{M}, i.e. decreases linearly from 1.0 to 0.5 to attain good performance, where \textit{k} is present iteration and \textit{M} is maximum number of iterations.

\section{Hybrid Particle Swarm Optimization}
The hybrid quantum-behaved PSO algorithm with Cauchy distribution and natural selection strategy (QPSO-CD) is described as follows:
	
The QPSO-CD algorithm begins with the standard QPSO using equations (1), (2) and (3). The position and velocity of particles cannot be determined exactly due to varying dynamic behavior.  So, it can only be learned with the probability density function. Each particle can be mutated with Gaussian or Cauchy distribution. We mutated QPSO with Cauchy operator due to its ability to make larger perturbation.  Therefore, there is a higher probability with Cauchy as compared to Gaussian distribution to come out of the local optima region. The QPSO algorithm is mutated with Cauchy distribution to increase its diversity, where mbest or global best position is mutated with fixed mutation probability ({\textit{Pr}). The probability density function $(f(x))$ of the standard Cauchy distribution is given as:
	
	\begin{equation}
	f(x)=\dfrac{1}{\pi(1+x^2)} ~ ~ ~ ~ -\infty < x < \infty,
	\end{equation}

It should be noted that mutation operation is executed on each vector by adding Cauchy distribution random value (\textit{D}(.)) independently such that
\begin{equation}
x^{'}=x+\phi D(.)
\end{equation} 	
where $x^{'}$ is new location after mutated with random value to \textit{x}. At last, the position of particle is selected and the particles of swarm are sorted on the basis of their fitness values after each iteration. Further, substitute the group of particles having worst fitness values with the best ones and optimal solution is determined.  The main objective of using natural mechanism is to refine the capability and accuracy of QPSO algorithm.
 	
 \begin{center}
\begin{minipage}{.7\linewidth}
\begin{algorithm}[H]
		\caption{QPSO-CD algorithm}
		\label{pseudoPSO}
		\begin{algorithmic}[1]
			\State The swarm is initialized with random numbers distributed uniformly: random $x_i$. 
			\State Do
			\State $\alpha$ decreases linearly from 1.0 to 0.5 
			
			\For {\textit{k}=1 to \textit{M} do}
			\State $\alpha=1-(1.0-0.5).k/M$
			\If {$Pr<rand(0,1)$}
			\State Calculate the mbest of the swarm using Eq (3)
			\EndIf
			\For {\textit{i}=1: to \textit{N} do}
			\If {Fitness$(p_i) <$ Fitness$(x_i)$} $p_i=x_i$; 
			\State $G=argmin(Fitness(p_i)$;
			\EndIf
			\For{\textit{j}:1 to \textit{D} do}
			\State $r_{1}=rand(0, 1); r_{2}=rand(0,1)$;
			\State $\phi=c_{1}.r_{1}/(c_{1}r_{1}+c_{2}r_{2})$;
			\State $p_{i, j}=\phi.P_{i, j}+(1-\phi). G_j$;
			\If {$rand(0,1) < 0.5$}
			\State $x_{i,j}=p_{i, j}+\alpha. abs(mbest_{i, j}-x_{i, j}). log (1/u)$
			\Else
			\State $x_{i,j}=p_{i, j}-\alpha. abs(mbest_{i, j}-x_{i, j}). log (1/u)$
			\EndIf
			\EndFor
			\State $Fx(i)$=Fitness$(x(i,:))$;
			\EndFor
			
			\State $[SF, Sx]=sort(Fx)$; 
			\State $Z=round((N-1)/S)$; \Comment{\textit{S} is selection parameter}
			\State $x(Sx((N-Z+1):N))=x(Sx(1:Z))$;  \Comment{Sort the particles from best to worst position}
			\EndFor
			\State Until termination criterion is met
			
		\end{algorithmic}
	\end{algorithm}
	 \end{minipage} 
	 \end{center}

The natural selection method is used to enhance the convergence characteristics of proposed QPSO-CD algorithm, where the fitter solutions are used for the next iteration. The procedure of selection method for \textit{N} particles is as follows:
\begin{equation}
F(X(t))=\{F(x_1(t)), F(x_2(t)), ..., F(x_N(t))\}
\end{equation}
where $X(t)$ is position vector of particles at time \textit{t} and $F(X(t))$ is the fitness function of swarm.  Next step is to sort the particles according to their fitness values from best one to worst position such that 
\begin{equation}
\begin{split}
F(X^{'}(t))=\{F(x_1^{'}(t)),   F(x_2{'}(t)) , ..., &F(x_N{'}(t))\} \\
X^{'}(t)=\{x_1^{'}(t), x_2{'}(t), ..., x_N{'}(t)\}
\end{split}
\end{equation}

In Algorithm 1, $SF$ and $Sx$ are the sorting functions of fitness and position respectively. On the basis of natural selection parameters and fitness values, the positions of swarm particles are updated for the next iteration,
\begin{equation}
\begin{split}
X^{'}(t)=\{x_1^{''}(t), x_2{''}(t), ...,  x_S{''}(t)\}, & \\
X^{''}_{k}(t)=\{x_1^{'}(t), x_2{'}(t), ..., x_Z{'}(t)\}
\end{split}
\end{equation}
where $(1 \leq k \leq S)$, \textit{S} denotes the selection parameter, \textit{Z} signifies the number of best positions selected according to fitness values such that $S=N/Z$ and $X^{''}(t)$ is updated position vector of particles. The selection parameter \textit{S} is generally set as 2 to replace the half of worst positions with the half of best positions of particles. It improves the precision of the direction of particles, protect the global searching capability and speed up the convergence.
	
\section{Experimental results}

The performance of proposed QPSO-CD algorithm is investigated on representative benchmark functions, given in Table 1. Further, the results are compared with classical PSO (PSO), standard QPSO, QPSO with delta potential (QDPSO) and QPSO with mutation operator (QPSO-MO). The details of numerical  benchmark functions are given in Table 1.
\begin{table}[!ht]
	\centering
	\caption{Details of benchmark functions}
	\begin{tabular}{|c|c|}
		\hline
		Test function  & Initial range  \\
		\hline
		\multicolumn{2}{|c|}{\textbf{Sphere function}}\\
		\hline
		$f_1(x)=\sum\limits_{i=1}^{n}x_{i}^2$ & (-100, 100)  \\
		\hline
		\multicolumn{2}{|c|}{\textbf{Rosenbrock function}} \\
		\hline
		$f_2(x)=\sum\limits_{i=1}^{n}100(x_{i+1}-x_{i}^2)^2+(x_{i}-1)^2$ & (-5.12, 5.12)  \\
		\hline
		\multicolumn{2}{|c|}{\textbf{Greiwank function}}\\
		\hline
		$f_3(x)=\dfrac{1}{4000} \sum\limits_{i=1}^{n}x_{i}^2-\prod_{i=1}^{n}cos(\dfrac{x_i}{\sqrt{i+1}})+1$& (-600, 600) \\
		\hline
		\multicolumn{2}{|c|}{\textbf{Rastrigrin function}} \\
		\hline
		$f_4(x)=\sum\limits_{i=1}^{n}(x_{i}^2-10cos(2\pi x_{i})+10)$ & (-5.12, 5.12) \\
		\hline
	\end{tabular}
\end{table}

The performance of QPSO has been widely tested for various test functions. Initially, we have considered four representative benchmark functions to determine the reliability of QPSO-CD algorithm. For all the experiments, the size of population is 20, 40 and 80 and dimension sizes are 10, 20 and 30. The parameters for QPSO-CD algorithm are as follows: the value of $\alpha$ decreases from 1.0 to 0.5 linearly, the natural selection parameter \textit{S}=2 is taken, $c_1, c_2$ correlation coefficients are set equal to 2.

The mean best fitness values of PSO, QPSO, QDPSO, QPSO-MO and QPSO-CD are recorded for 1000, 1500 and 2000 runs of each function. Fig. 2 to Fig. 5 depict the performance of functions $f_1$ to $f_4$ with respect to mean best fitness against the number of iterations. In Table 2, \textit{P} denotes the population, dimension is represented by \textit{D} and \textit{G} stands for generation. The numerical results of QPSO-CD shown optimal solution with fast convergence speed and high accuracy. The results shown that QPSO-CD performs better on Rosenbrock function than its counterparts in some cases. When the size of population is 20 and dimension is 30, the results of proposed algorithm are not better than QPSO-MO, but QPSO-CD performs better than PSO, QPSO and QDPSO. The performance of QPSO-CD is significantly better than its variants on Greiwank and Rastrigrin functions. It has outperformed other algorithms and obtained optimal solution (near zero) for Greiwank function. In most of the cases, QPSO-CD is more efficient and outperformed the other algorithms.  

\begin{figure} 
\centering
\begin{minipage}{.5\textwidth}
  \centering
  \includegraphics[width=1\linewidth]{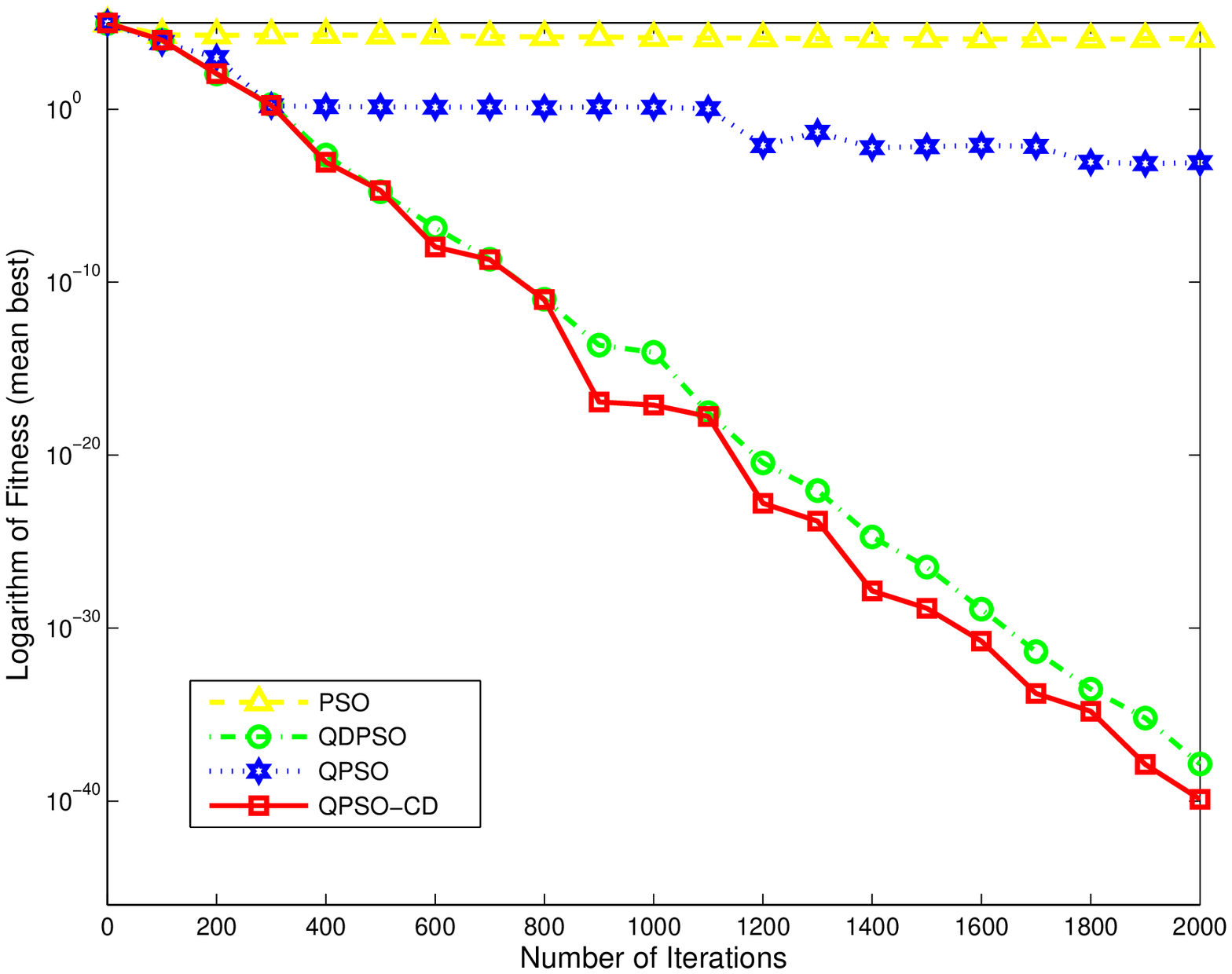}
  \captionof{figure}{Effectiveness of QPSO-CD for sphere function $f_1$}
  \label{fig:test1}
\end{minipage}%
\begin{minipage}{.5\textwidth}
  \centering
  \includegraphics[width=1\linewidth]{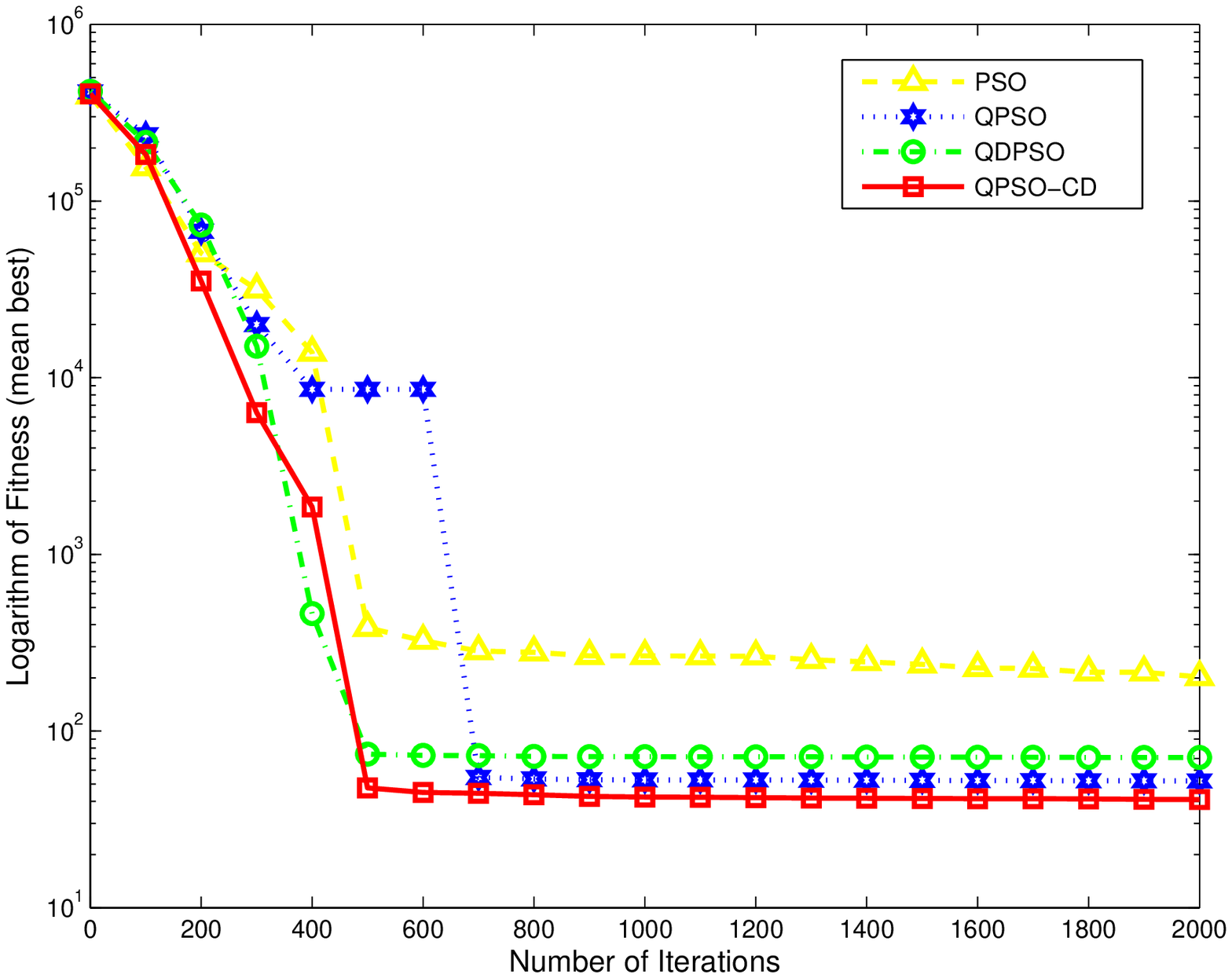}
  \captionof{figure}{Effectiveness of QPSO-CD for Rosenbrock function $f_2$}
  \label{fig:test2}
\end{minipage}
\end{figure}

\begin{figure} [!ht]
\centering
\begin{minipage}{.5\textwidth}
  \centering
  \includegraphics[width=1\linewidth]{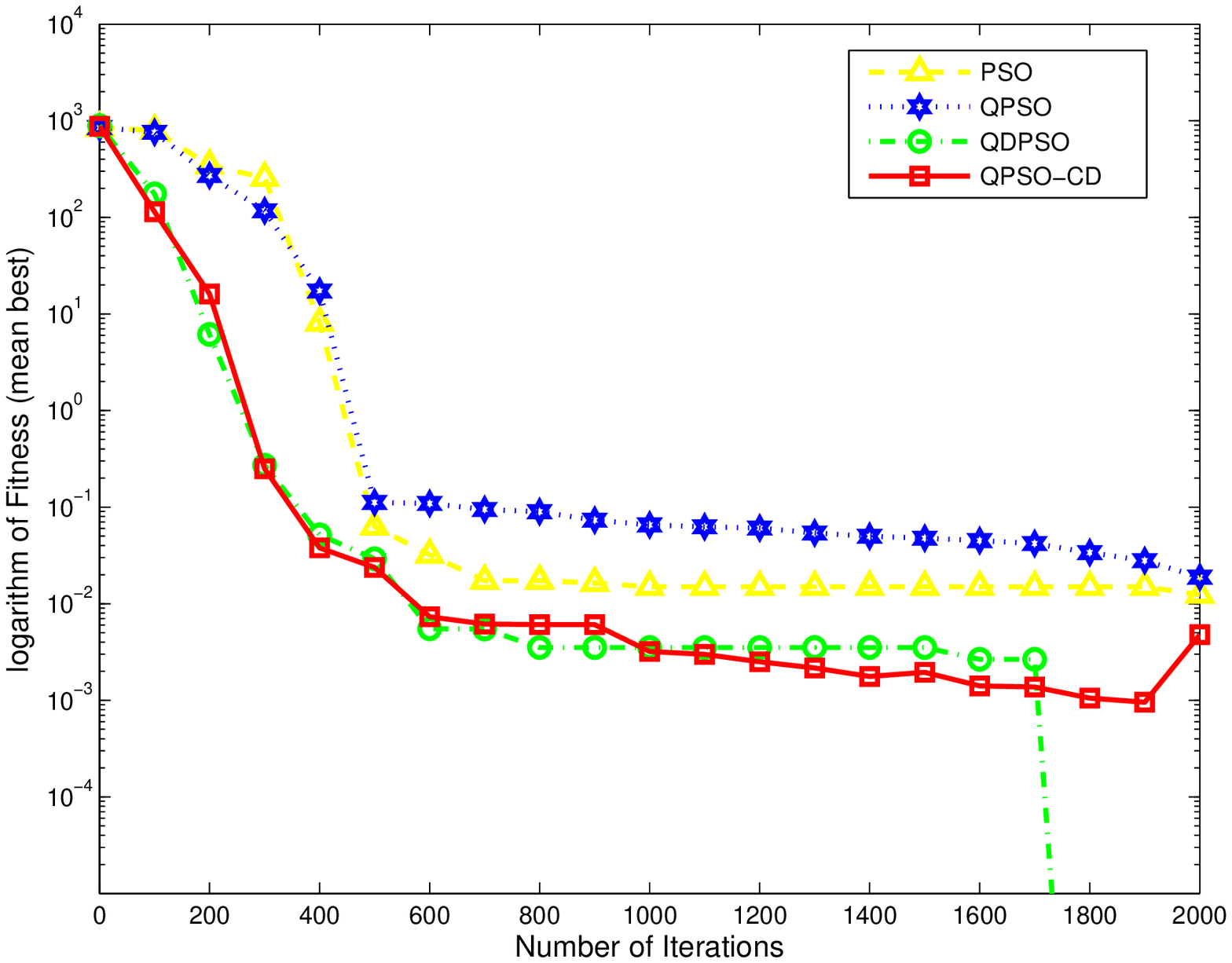}
  \captionof{figure}{Effectiveness of QPSO-CD for Greiwank function $f_3$}
  \label{fig:test1}
\end{minipage}%
\begin{minipage}{.5\textwidth}
  \centering
  \includegraphics[width=1\linewidth]{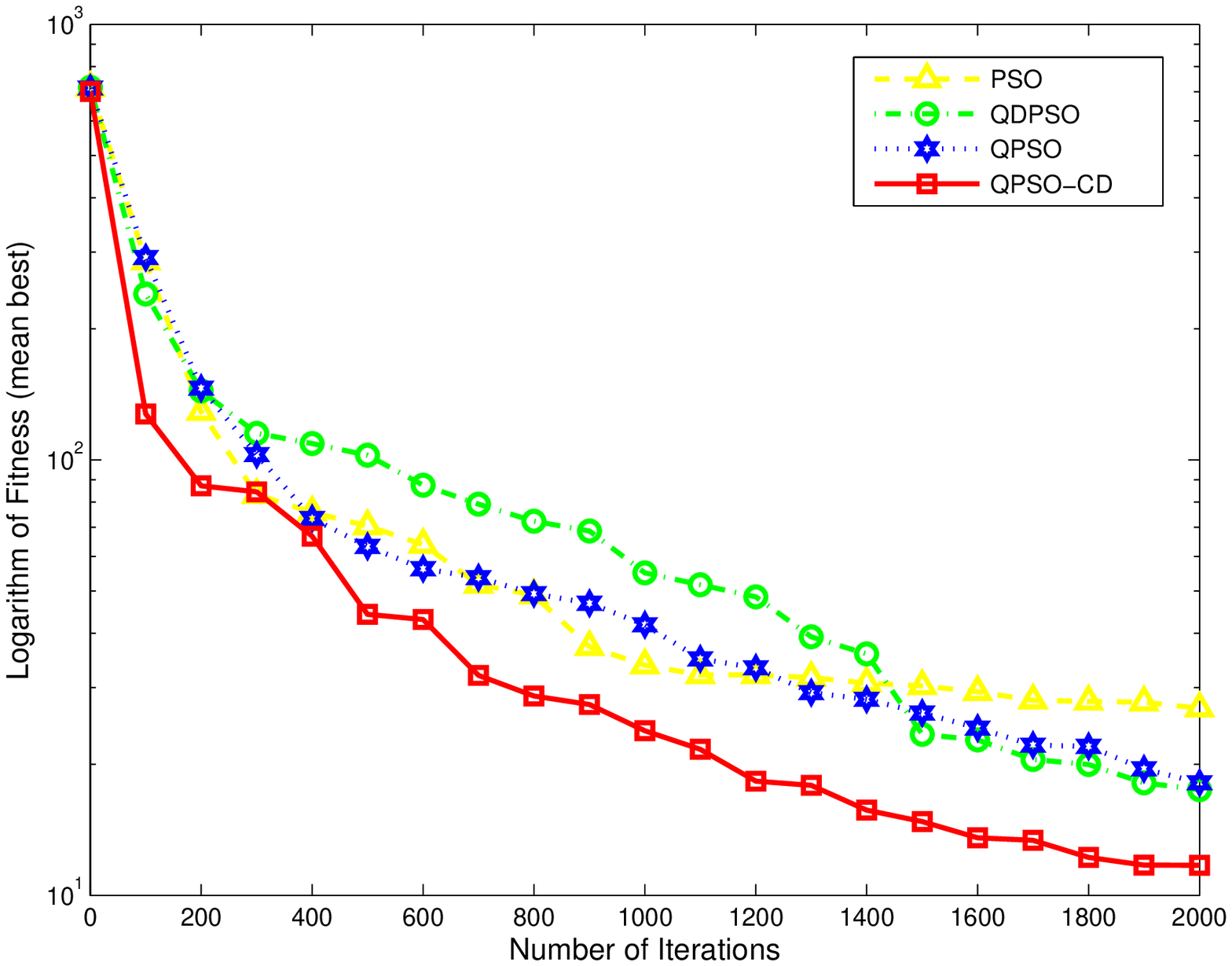}
  \captionof{figure}{Effectiveness of QPSO-CD for Rastrigrin function $f_4$}
  \label{fig:test2}
\end{minipage}
\end{figure}
\begin{table}[!ht]
	\centering
	\caption{Comparison results of Sphere and Rosenbrock functions}
	\begin{tabular}{c|c|c|c|c|c|c|c|c|c|c|c|c}
		\hline
		\multicolumn{3}{c|}{}& \multicolumn{5}{c|}{\textbf{Sphere function}} &  \multicolumn{5}{c}{\textbf{Rosenbrock function}}\\
		\hline
		P & D & G & PSO & QPSO & QDPSO & QPSO-MO & QPSO-CD & PSO & QPSO & QDPSO & QPSO-MO & QPSO-CD \\
		\hline
		\multirow{3}{*}{20}& 10 & 1000 &0.0  & 4.01e-40  & 1.513e-49  &1.508e-48  & 1.738e-50 & 95.10 & 58.41 & 14.22 & 22.18 & 34.67  \\
		& 20 & 1500 & 0.0 & 2.58e-21  & 1.339e-30   & 1.296e-31 & 1.032e-30 & 204.38 & 110.5 & 175.31 & 68.40 & 54.76 \\
		& 30 & 2000 & 0.0 & 2.08e-13  & 1.953e-21  & 1.918e-21  & 1.808e-21 & 314.46 & 148.5 & 242.37 & 113.30 & 122.5 \\
		\hline
		\multirow{3}{*}{40}& 10 & 1000 & 0.0 & 2.73e-67 & 1.087e-73  & 1.146e-51 & 1.154e-72 & 70.28 & 10.42 & 15.86 & 7.985 &  8.843    \\
		& 20 & 1500 & 0.0  & 4.84e-28 & 1.397e-42  & 1.417e-42  & 1.237e-41 & 178.98 & 48.45 & 112.46 & 52.93 & 41.77 \\
		& 30 & 2000 & 0.0 & 2.02e-25  & 2.850e-30  &  2.471e-28 & 1.946e-23  & 288.58 & 58.32 & 76.42 & 64.19& 58.04 \\
		\hline
		\multirow{3}{*}{80}& 10 & 1000 &  0.0 & 7.66e-95 & 5.553e-90  & 4.872e-71 &  6.437e-72 & 36.29 & 8.853 & 36.34 & 5.715 & 7.419  \\
		& 20 & 1500 & 0.0 & 1.62e-60 &  1.654e-54  & 1.677e-58  & 1.609e-62 & 84.78 & 34.88 & 23.54 &24.45 & 21.78 \\
		& 30 & 2000 & 0.0 & 2.05e-44 & 1.042e-40  & 1.131e-42  & 1.128e-41 & 202.58 & 52.17 & 70.81&45.22 & 40.97 \\
		\hline
	\end{tabular}
\end{table}

\begin{table}[!ht]
	\centering
	\caption{Comparison results of Greiwank and Rastrigrin functions}
	\begin{tabular}{c|c|c|c|c|c|c|c|c|c|c|c|c}
		\hline
		\multicolumn{3}{c|}{}& \multicolumn{5}{c|}{\textbf{Greiwank function}} &  \multicolumn{5}{c}{\textbf{Rastrigrin function}}\\
		\hline
		P & D & G & PSO & QPSO & QDPSO & QPSO-MO & QPSO-CD & PSO & QPSO & QDPSO & QPSO-MO & QPSO-CD \\
		\hline
		\multirow{3}{*}{20}& 10 & 1000 & 0.089 & 0.078 & 0.1003 &0.0732 & 0.072 & 5.526 & 5.349 & 4.969 & 4.478 & 4.051   \\
		& 20 & 1500 & 0.0300 & 0.2001 &  0.0086 & 0.0189 & 0.0078 & 23.17 & 21.28 & 17.08 & 15.63 & 13.22\\
		& 30 & 2000 & 0.0181 & 0.0122 & 0.0544& 0.0103 & 0.0026 & 46.29 & 32.57 & 48.61 & 27.80 & 31.48 \\
		\hline
		\multirow{3}{*}{40}& 10 & 1000 &0.0826 & 0.055 & 0.048 & 0.0520 &  0.041 & 3.865 & 3.673 & 2.032 & 3.383 &  2.100 \\
		& 20 & 1500 & 0.0272 & 0.0149 & 0.0004 & 0.0247& 0.0106 & 15.68 & 14.37 & 10.94 & 11.01 & 10.77\\
		& 30 & 2000 & 0.0125 & 0.0117 & 0.0009& 0.0105& 0.0102 & 37.13 & 23.01 & 21.37& 21.01& 21.19\\
		\hline
		\multirow{3}{*}{80}& 10 & 1000 & 0.0723 & 0.0341 &0.0 & 0.0542& 0.0702 & 2.562 & 2.234 & 0.923 & 2.183&  1.943  \\
		& 20 & 1500 & 0.0274  &  0.0189 & 0.0 & 0.0194& 0.0161 & 12.35 & 9.66 & 6.955 & 8.075& 7.021 \\
		& 30 & 2000 & 0.0123 & 0.0118 & 0.0 & 0.0082 & 0.0031 & 26.89 & 17.48 & 18.13 &14.99 & 11.73 \\

		\hline
	\end{tabular}
\end{table}

\section{Correctness and Time Complexity Analysis of a QPSO-CD Algorithm}
In this Section, the correctness and time complexity of a proposed algorithm QPSO-CD is analyzed and compared with the classical PSO algorithm.
\begin{theorem}
	The sequence of random variables $\{S_n, n \geq 0\}$ generated by QPSO with Cauchy distribution converges to zero in probability as \textit{\textit{n}} approaches infinity. 
		\end{theorem}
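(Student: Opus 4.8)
The plan is to reduce the analysis to a one–dimensional stochastic recursion for $S_n$ obtained from the position–update rule (1)--(3) together with the Cauchy mutation (4)--(5), and then to show that this recursion is on average strictly contractive while the occasional heavy–tailed perturbation remains controllable in probability. First I would write, coordinate by coordinate and assuming without loss of generality that the local attractor and the optimum sit at the origin (so that $p_n\to 0$ and $mbest_n\to 0$ along a run), the update in the form $S_{n+1}=\alpha_n\,|\ln(1/u_n)|\,S_n+B_n$, where the $u_n\sim U(0,1)$ are i.i.d., $\alpha_n$ is the contraction--expansion coefficient, and $B_n$ is the additive term produced by mutating $mbest$ with probability $Pr$ by a scaled standard Cauchy variate $\phi D_n$. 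The multiplicative factor $A_n:=\alpha_n|\ln(1/u_n)|$ is the engine of convergence.

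The second step is the moment computation that makes $A_n$ contractive: $\mathbb{E}\,|\ln(1/u_n)|=\int_0^1(-\ln u)\,du=1$, so $\mathbb{E}[A_n\mid\mathcal{F}_{n-1}]=\alpha_n$, and since $\alpha_n$ decreases linearly from $1$ to $0.5$ (Sect.~3) we have $\alpha_n\le\bar\alpha<1$ for all sufficiently large $n$; taking conditional expectations and iterating the recursion for the mutation–free part yields $\mathbb{E}[S_n]\le\big(\prod_{k\le n}\alpha_k\big)S_0\to 0$. Equivalently, and this is the cleaner route, I would pass to $\ln S_n$ and invoke the strong law of large numbers using $\mathbb{E}\,\ln|\ln(1/u_n)|=-\gamma<0$, which shows the mutation–free recursion drives $\ln S_n\to-\infty$, hence $S_n\to 0$, at a geometric rate.

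The delicate part --- and the step I expect to be the main obstacle --- is the Cauchy mutation term $B_n$: a standard Cauchy variable has no finite mean or variance, so the usual $L^1$/$L^2$ supermartingale machinery cannot be applied to it directly, and this is precisely why the statement is phrased as convergence \emph{in probability} rather than almost surely. Here I would use only the tail estimate obtained by integrating the density (4), namely $\Pr(|D_n|>a)=\tfrac{2}{\pi}\arctan(1/a)\le\tfrac{2}{\pi a}$ for $a>0$, together with the observation that a mutation occurring at step $m$ enters $S_n$ multiplied by the product of all subsequent contraction factors $\prod_{m<k\le n}A_k$, whose conditional expectation is $\prod_{m<k\le n}\alpha_k$. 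Fixing $\epsilon>0$ and $\delta>0$, I would split the index range at a point $m_0$: bound the contribution of the finitely many early mutations (each contracted by a factor tending to $0$) via Markov's inequality, and bound the tail mutations using the Cauchy tail estimate after conditioning on the mutation indicators; choosing $m_0$ and then $n$ large forces $\Pr(S_n>\epsilon)<\delta$.

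Finally I would assemble the pieces: $\Pr(S_n>\epsilon)\to 0$ for every $\epsilon>0$ is exactly the assertion that $S_n\to 0$ in probability. Along the way I would highlight the contrast with classical PSO, where convergence hinges on the parameter window $\alpha<1.782$, whereas the quantum update is self–contracting for every admissible $\alpha$ because of the logarithmic factor. One loose end to tidy is the decoupling of the $S_n$ recursion from $mbest_n$ and $p_n$: I would either argue that $\{|mbest_n-x_n|\}$ stays bounded --- the natural–selection step of Algorithm~1 keeps the swarm inside the feasible box --- so the driving sequence is bounded, or set up the recursion for the pair $(\,|x_n-g|,\,|mbest_n-g|\,)$ and run the same contraction argument on both components simultaneously.
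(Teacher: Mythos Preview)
Your approach is fundamentally different from --- and considerably more ambitious than --- the paper's own argument. The paper does not analyse the stochastic recursion at all. It simply treats each $S_n$ as an i.i.d.\ standard Cauchy random variable, introduces the rescaling $Q_n=\alpha S_n$ with $\alpha=n^{-\lambda}$ for a fixed $\lambda>0$, and computes
\[
P(|Q_n|>\xi)=P(|S_n|>\xi n^{\lambda})=1-\frac{1}{\pi}\int_{-\xi n^{\lambda}}^{\xi n^{\lambda}}\frac{ds}{1+s^{2}}\longrightarrow 0\qquad(n\to\infty),
\]
which is nothing more than the observation that a tight sequence divided by something tending to infinity goes to zero in probability. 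There is no link to the position update~(1), no treatment of the factor $|\ln(1/u)|$, no recursion; the scaling $n^{-\lambda}$ is introduced ad hoc and does not correspond to the algorithm's $\alpha$, which only decreases linearly from $1$ to $0.5$. Read literally, the paper proves $Q_n\to 0$, not $S_n\to 0$, so it does not even match the statement it announces.

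What you gain with your route is an argument that actually engages with the algorithm's dynamics: the contraction driven by $\mathbb{E}[\ln|\ln(1/u)|]=-\gamma<0$ together with the Cauchy tail bound $\Pr(|D|>a)\le\tfrac{2}{\pi a}$ are exactly the right ingredients for a bona fide convergence theorem for the QPSO--CD iteration. The price is the loose end you already flag: writing $S_{n+1}=A_nS_n+B_n$ presupposes that $p_n$ and $mbest_n$ already sit at the optimum, which is circular unless you run the contraction argument jointly on $(|x_n-g|,|mbest_n-g|)$ as you propose at the end; that coupling step is the real work and would need to be carried out explicitly. The paper sidesteps all of this by never modelling the recursion in the first place.
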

	\begin{proof}
		Recall, the probability density function of standard Cauchy distribution and its convergence probability \cite{rudolph1997local} are given as
	\begin{equation}
	\begin{gathered}
	f(s)=\dfrac{1}{\pi (1+s^2)} ~ \text{for} -\infty < s < \infty, \\
P(x \leq S_n \leq y) = \dfrac{1}{\pi} \int_{y}^{x} \dfrac{ds}{(1+s^2)}, ~ \forall x \leq y	
\end{gathered}
	\end{equation}	
		Consider a random variable $Q_n$ interpreted as
		$$ Q_n = \alpha S_n, ~ \alpha =\dfrac{1}{n^\lambda} $$
		where $\lambda$ denotes a fixed positive constant. Correspondingly, the probability density function can be calculated as
		\begin{align*}
		P(Q_n \leq q) & = P(\alpha S_n \leq q) \\
		& = P\left(S_n \leq \dfrac{q}{\alpha}\right)\\
		& = \int_{-\infty}^{\dfrac{q}{\alpha}} f(s) ds. \dfrac{dP(q_n \leq q)}{dq}\\
		& = \dfrac{1}{\alpha} f\left(\dfrac{q}{\alpha}\right)	
		\end{align*}
	i.e. the probability density function of random variable $Q_n$.
	\begin{align*}
	P(|Q_n|>\xi) & = P(|S_n|>\xi n^\lambda)\\
	& = P(S_n > \xi n^\lambda)+ P(S_n > - \xi n ^\lambda)\\
	& = P(\xi n^\lambda < S_n < \infty)+ P(- \infty < S_n < - \xi n^\lambda)
   \end{align*}
		Using Eq. (9), the probability density function of random variable $Q_n$ becomes
		\begin{align*}
			P(|Q_n|>\xi) & = \dfrac{1}{\pi}\int_{\xi n^\lambda}^{\infty}\dfrac{ds}{\pi (1+s^2)} + \dfrac{1}{\pi}\int_{\infty}^{-\xi n^\lambda}\dfrac{ds}{\pi (1+s^2)} \\
			& = \left[1+\dfrac{1}{\pi} \int_{\xi n^\lambda}^{-\xi n^\lambda}\dfrac{ds}{\pi (1+s^2)}\right]\\
			& = 1-\dfrac{1}{\pi} \int_{-\xi n^\lambda}^{\xi n^\lambda}\dfrac{ds}{\pi (1+s^2)}= 0 ~ \text{as} ~ n \rightarrow \infty
		\end{align*}
		
		This completes the proof of the theorem.
	\end{proof}
\begin{defn}
	Let $\{S_n\}$ a random sequence of variables. It is converges to some random variable \textit{s} with probability 1, if for every $\xi > 0$ and $\lambda >0$, there exists $n_1(\xi, \lambda)$ such that $P(|S_n-s|< \xi) > 1-\lambda, \forall n > n_1$ or 
	\begin{equation}
	P\left(\lim_{n \rightarrow \infty} |S_n-s|< \xi\right)=1
	\end{equation}
\end{defn}

The efficiency of the QPSO-CD algorithm is evaluated by number of steps needed to reach the optimal region $R(\xi)$. The method is to evaluate the distribution of number of steps needed to hit $R(\xi)$ by comparing the expected value and moments of distribution. The total number of stages to reach the optimal region is determined as $W(\xi)=inf\{n \mid f_n  \in R(\xi)\}$. The variance $V(W(\xi))$ and expectation value $E(W(\xi))$ are determined as 

\begin{equation}
E(W(\xi))= \sum_{n=0}^{\infty}nx_n,
\end{equation}

\begin{equation}
\begin{split}
V(W(\xi)) & =E(W^2(\xi))-\{E(W(\xi))\}^2\\
& = \sum_{n=0}^{\infty}n^2x_n-\left(\sum_{n=0}^{\infty}nx_n\right)^2
\end{split}
\end{equation}

In fact, the $E(W(\xi))$ depends upon the convergence of $\sum_{n=0}^{\infty}nx_n$. It is needed that $\sum_{j=0}^{\infty}x_n=1$, so that QPSO-CD can converge globally. The number of objective function  evaluations are used to measure time. The main benefit of this approach is that it shows relationship  between processor and measure time as the complexity of objective function increases. We used Sphere function  $f(x)=x^{T}.x$ with a linear constraint $g(x)=\sum^{n}_{j=0}x_j \geq 0$ to compute the time complexity. It has minimum value at 0. The value of optimal region is set as $R(\xi)=R(10^{-4}).$ To determine the time complexity, the algorithms PSO and QPSO-CD are executed 40 times on $f(x)$ with initial scope [-10, 10]$^{N}$, where \textit{N} denotes the dimension. We determine the mean number of objective function evaluations ($W(\xi)$), the variance ($V(W(\xi))$), the standard deviation (SD) ($\sigma_{W(\xi)}$), the standard error (SE) ($\sigma_{W(\xi)}/ \sqrt{40}$)  and ratio of mean and dimension ($W(\xi)/N$). The contraction coefficient $\alpha= 0.75$ is used for QPSO-CD and constriction coefficient $\chi=0.73$ for PSO with acceleration factors $c_1=c_2$=2.25.

\begin{table}[!ht]
	\centering
	\caption{Results of the time complexity for QPSO-CD algorithm}
	\begin{tabular}{c|c|c|c|c|c}
		\hline
		Dimension (N) & Mean & Variance & SD & SE & Mean/N  \\
		\hline
	2	 &302.38  & 4164.3 & 64.532   &10.203 & 151.19 \\
	3	& 452.18   & 4541.9 & 67.394   &10.655 & 150.72\\
	 4 & 621.29   & 5208.2 & 72.168 &  11.410  & 155.32\\
	 5 &755.88 & 6675.0 & 81.701   & 12.918 & 151.17\\
	 6 & 879.13 & 8523.7 & 92.324    & 14.597 & 146.52\\
	 7 &  1022.06 & 9575.4  & 97.854      & 15.472 & 146.00\\
	 	8 & 1158.52  & 10269.7  & 101.341   & 16.023 & 144.81\\
	 	9 & 1308.17  & 12053.4 &   109.788  & 17.359 & 145.35 \\
	 	10 & 1459.3 & 12648.3  &  112.465   & 17.782 & 145.93\\
		\hline
	\end{tabular}
\end{table}

\begin{table}[!ht]
	\centering
	\caption{Results of the time complexity for PSO algorithm}
	\begin{tabular}{c|c|c|c|c|c}
		\hline
		Dimension (N) & Mean & Variance & SD & SE & Mean/N  \\
		\hline
		2	 &691.4  & 17297.5 & 131.52   & 20.795 &  345.7\\
		3	& 979.1   & 22281.5&  149.27  & 23.601& 326.3\\
		4 & 1167.2   & 24282.9 & 155.72 &  24.638 & 291.8\\
		5 & 1328.7 & 21853.7 & 147.83  &  23.373& 265.7\\
		6 & 1489.9 &  32008.7&   178.91 &  28.288& 248.3 \\
		7 &  1744.3 & 502297  &  224.12    & 35.436 & 249.1 \\
		8 & 1978.5  &  41233.3 &  203.06  & 31.106 & 247.3 \\
		9 & 2259.1  & 36217.8 & 190.31  & 30.090&  251.0\\
		10 & 2604.2 &  43559.8 &  208.71   &  32.999& 260.4\\
		\hline
	\end{tabular}
\end{table}
\begin{figure*}[!ht]
	\centering
	\includegraphics[scale=0.6]{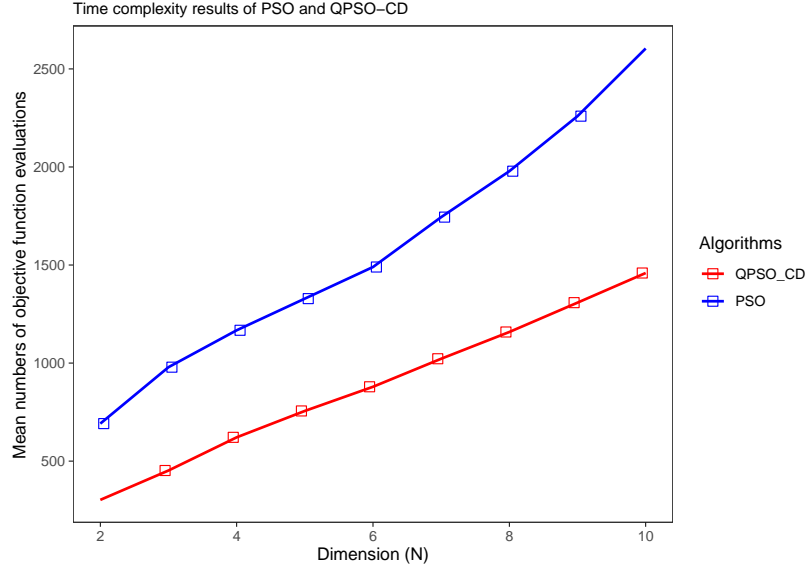}
	\caption{Time complexity results for PSO and QPSO-CD}
\end{figure*}
\begin{figure*}[!ht]
	\centering
	\includegraphics[scale=0.6]{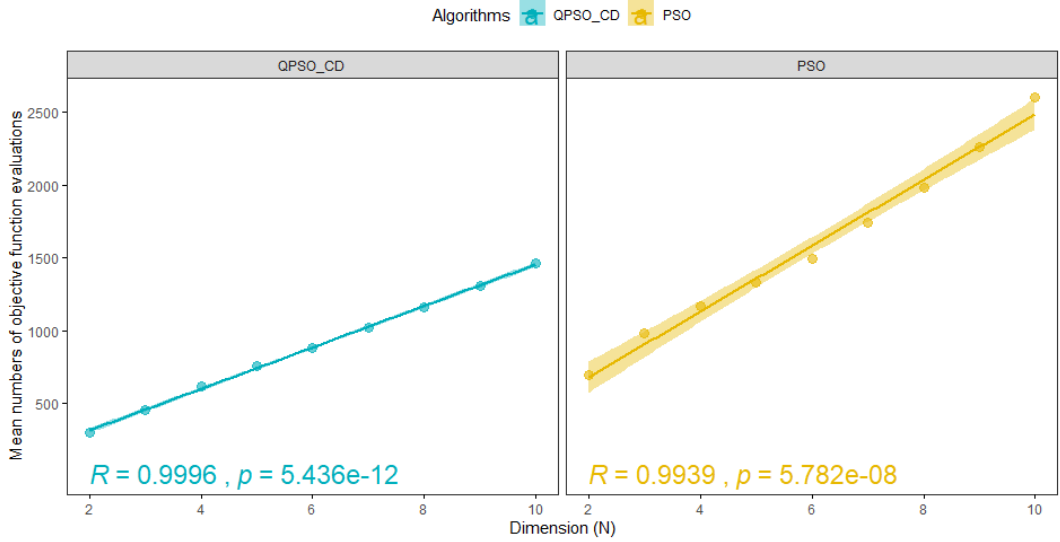}
	\caption{Comparison of Correlation coefficient of PSP and QPSO-CD}
\end{figure*}

Table 1 and 2 show the statistical results of time complexity test for QPSO-CD and PSO algorithm, respectively. Fig 6 indicates that the time complexity of proposed algorithm increases non-linearly as the dimension increases. However, the time complexity of PSO algorithm increases adequately linearly. Thus, the time complexity of QPSO-CD is lower than PSO algorithm. In Fig 7, QPSO-CD shows a strong correlation between $W(\xi)$ and \textit{N}, i.e. the correlation coefficient is 0.9996. For PSO, the linear correlation coefficient is 0.9939, which is not so phenomenal as that in case of QPSO-CD.  The relationship between  mean and dimension clearly shows that the value of correlation coefficient is fairly stable for QPSO-CD as compared to PSO algorithm.
\section{QPSO-CD for Constraint Engineering Design Problems}
 There exists several approaches for handling constrained optimization problems. The basic principle is to convert the constrained optimization problem to unconstrained by combining objective function and penalty function approach. Further, minimize the newly formed objective function with any unconstrained algorithm. Generally, the constrained optimization problem can be described as in Eq. (13).

The objective is to minimize the objective function \textit{f(x)} subjected to equality $(h_j(x))$ and inequality $(g_i(x))$ constrained functions, where $p(i)$ is the upper bound and $q(i)$ denotes the search space lower bound. The strict inequalities of form $g_i(x) \geq 0$ can be converted into $-g_i(x)\leq 0$ and  $h_i(x)$ equality constraints can be converted into inequality constraints $h_i(x) \geq 0$ and $h_i(x) \leq 0$. Sun et al. \cite{sun2007using} adopted non-stationary penalty function to address non-linear programming problems using QPSO. Coelho \cite{dos2010gaussian} used penalty function with some positive constant i.e. set to 5000. We adopted the same approach and replace the constant with dynamically allocated penalty value.
\begin{equation}
\begin{split}
& \min\limits_{x}=f(x) \\
\text{subject to}  \\
& g_i(x) \leq 0, i=0, 1,... n-1\\
& h_j(x)=0, j=1,2,...r \\
& p(i)\leq x_i \leq q(i), 1 \leq i \leq m\\
& x=\{x_1,x_2, x_3,..., x_m\} 
\end{split}
\end{equation}

Usually, the procedure is to find the solution for design variables that lie in search space upper and lower bound constraints such that $x_i \in [p(i), q(i)]$. If solution violates any of the constraint, then the following rules are applied
\begin{equation}
\begin{split}
& x_i=x_i + \{p(x_i)-q(x_i)\}.~ rand[0,1] \\
& x_i=x_i - \{p(x_i)-q(x_i)\}.~ rand[0,1]
\end{split}
\end{equation}

where rand[0, 1] is randomly distributed function to select value between 0 and 1. Finally, the unconstrained optimization problem is solved using dynamically modified penalty values according to inequality constraints $g_i(x)$. Thus, the objective function is evaluated as 

\begin{equation}
F(x)=\left\{
\begin{array}{ll}
f(x)~ ~ ~ ~ ~ ~ ~ ~ ~ ~ ~ ~ ~ ~ ~ ~ ~ ~ ~ ~ ~ ~ ~\text{if}~ g_i(x)\leq 0\\
f(x)+y(t). \sum\limits_{i=1}^{n} g_i(x)~ ~  \text{if}~ g_i(x)>0\\
\end{array} \right \}
\end{equation}

where $f(x)$ is the main objective function of optimization problem in Eq. (13), \textit{t} is the iteration number and $y(t)$ represents the dynamically allocated penalty value.

In this Section, QPSO-CD is tested for three-bar truss, tension/compression spring  and pressure vessel design problems consisting
different members and constraints. The performance of QPSO-CD is compared and analyzed with the results of PSO, QPSO, and SP-QPSO algorithms as reported in the literature.
\subsection{Three-bar truss design problem}
Three-bar truss is a constraint design optimization problem, which has been widely used to test several methods. It consists cross-section areas of three bars $x_1$ (and $x_3$) and $x_2$ as design variables. The aim of this problem is to  minimize the weight of truss subject to maximize the stress on these bars. The structure should be symmetric and subjected to two constant loadings $P_1=P_2=P$ as shown in Fig 8. The mathematical formulation of two design bars ($x_1$, $x_2$) and three restrictive mathematical functions are described as:

\begin{equation}
\begin{split}
\text{Min}  ~  f(x)& = (2\sqrt{2}x_1+x_2). l \\
\text{Subject to:} & \\
g_1(x)&=\dfrac{\sqrt{2}x_1+x_2}{\sqrt{2}x_1^{2}+2x_1x_2}P-\sigma \leq 0,\\
g_2(x)&= \dfrac{x_2}{\sqrt{2}x_1^{2}+2x_1x_2}P-\sigma \leq 0 \\
\end{split}
\end{equation}
$$
g_3(x)= \dfrac{1}{x_1+\sqrt{2}x_2}P-\sigma \leq 0, ~ \text{where}$$
$$ 0 \leq x_1, x_2  \leq 1, l=100cm, P=\sigma=2 KN/cm^{2}$$
\begin{figure}[ht]
	\centering
	\includegraphics[scale=0.75]{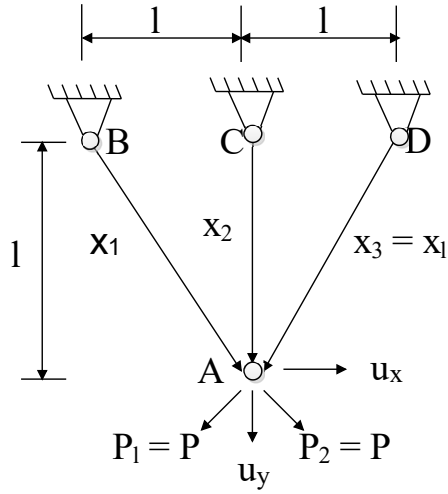}
	\caption{Structure of Three-bar truss}
\end{figure}

\begin{table}[!ht]
	\centering
	\caption{Comparison of optimal results for three-bar truss problem}
	\begin{tabular}{c|c|c|c|c}
		\hline
		Variables & PSO & QPSO & SP-QPSO & QPSO-CD  \\
		\hline
		$x_1$ &0.78911058  &0.788649  &  0.788796  & 0.788658\\
		$x_2$&  0.40702683 & 0.408322 &    0.407898& 0.40828488\\
		$g_1(x)$ & -6.6720e-06 & 1.6313e-07 & 6.4748e-06 &  9.00037e-06 \\
		$g_2(x)$ & -1.4655 & -1.4640 &   -1.4644 & -1.4640 \\
		$g_3(x)$ & -0.5345 &-0.5359  &    -0.5354 & -0.5359\\	
		$f(x)$ & 263.89686 &  263.89584 &     263.89500 & 263.89465\\	
		\hline
	\end{tabular}
\end{table}
The results are obtained by QPSO-CD are compared with its counterparts in Table 6. For three-bar truss problem, QPSO-CD is superior to optimal solutions  previously obtained in literature. The difference of best solution obtained by QPSO-CD among other algorithms is shown in Fig 9.

\begin{figure}[H]
	\centering
	\includegraphics[scale=0.55]{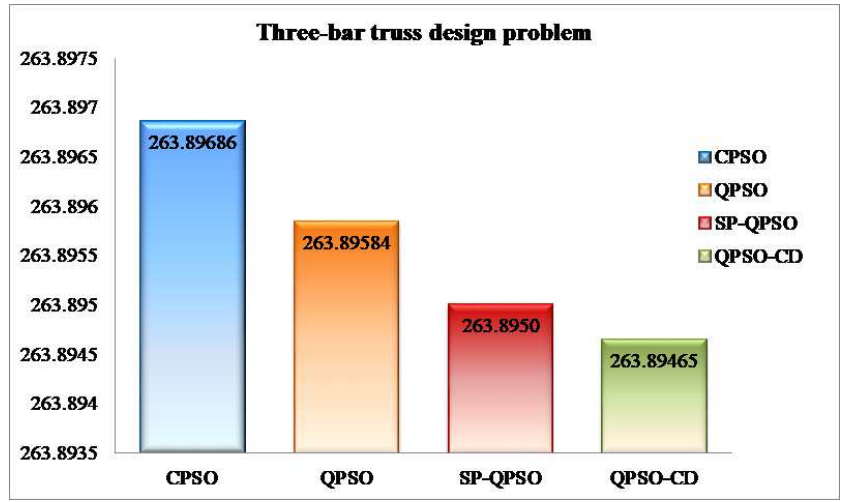}
	\caption{Optimal results of PSO, QPSO, SP-QPSO and QPSO-CD algorithms for three bar truss problem}
\end{figure}

\subsection{Tension/Compression spring design problem}
The main aim is to lessen the volume \textit{V} of a spring
subjected to tension load constantly as shown in Fig 10. Using the symmetry of structure, there are practically three design variables ($x_1, x_2, x_3$), where $x_1$ is the wire diameter, the coil diameter is represented by $x_2$ and $x_3$ denotes the total number of active coils.    The mathematical formulation for this problem is described as:

\begin{equation}
\begin{split}
\text{Min}  ~  f(x)&= (x_3+2)x_2x_1^2, \\
\text{Subject to:} & \\
g_1(x)&=\dfrac{1-x_2^{3}x_3}{71785x_16{4}} \leq 0,\\
g_2(x)&= \dfrac{4x_2^{2}-x_1x_2}{12566(x_2x_1^{3}-x_1^{4})}+\dfrac{1}{5108x_1^{2}}-1 \leq 0,\\
g_3(x)&=  \dfrac{1-140.45x_1}{x_2^2x_3}\leq 0,\\
g_4(x)&=  \dfrac{x_2+x_1}{1.5}-1\leq 0,\\
\text{where} & \\
0.05 \leq x_1 & \leq 2,  0.25  \leq x_2, \leq 1.3, 2 \leq x_3  \leq 15, \\
\end{split}
\end{equation}

	\begin{figure}[H]
	\centering
	\includegraphics[scale=0.38]{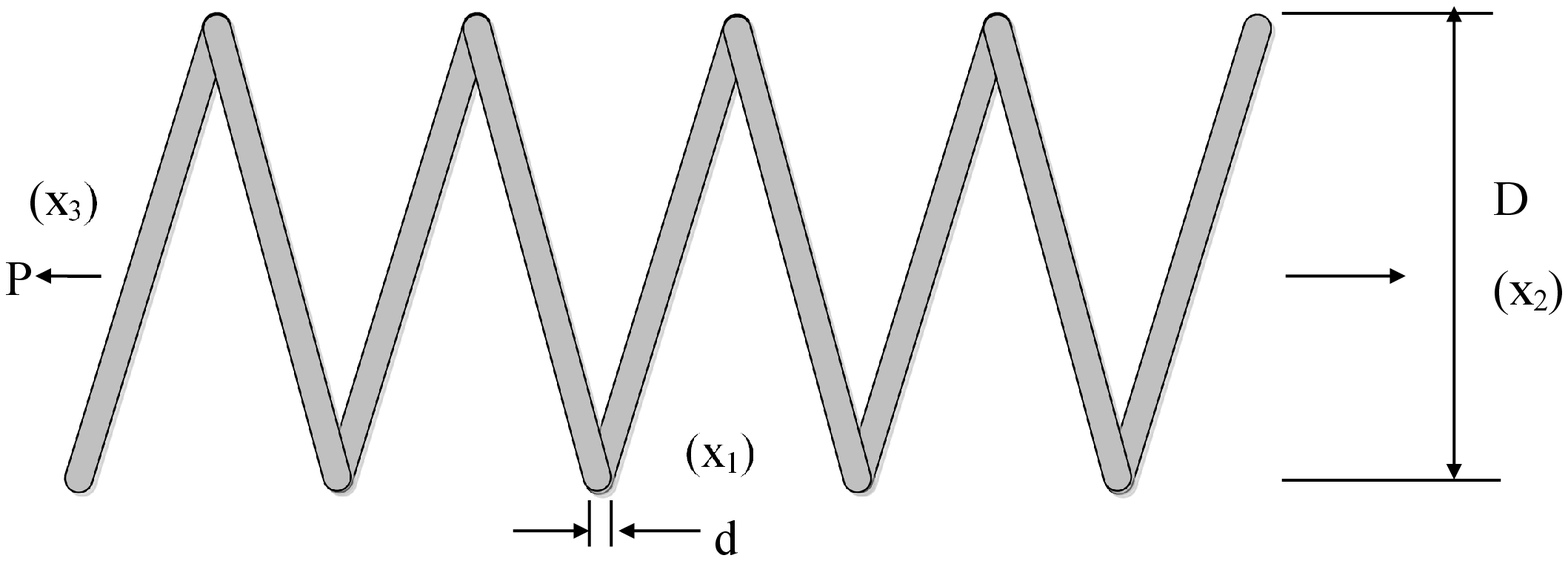}
	\caption{Structure of tension/compression spring}
\end{figure}
\begin{table}[!ht]
	\centering
	\caption{Comparison of optimal results for tension spring design problem}
	\begin{tabular}{c|c|c|c|c}
		\hline
		Variables & PSO & QPSO & SP-QPSO & QPSO-CD  \\
		\hline
		$x_1$ &  0.0516 & 0.0524  & 0.05 & 0.0513   \\
		$x_2$&  0.3542 & 0.2505  &  0.25  & 0.2502  \\
		$x_3$ & 11.7942 & 2 &  2 & 2\\
		$g_1(x)$ & -2.3006e-02  & 0.93145095  & 0.93034756 & 4.11004e-06  \\
		$g_2(x)$ &  -5.6059e-03  &-0.17471558  & -0.16568318 & -0.17352479  \\
		$g_3(x)$ & -3.9057 & -50.67  & -48.180  &  -49.561 \\	
		$g_4(x)$ & -0.7294 & -0.79986567 & -0.80   & -0.799 \\	
		$f(x)$ & 0.01305 &0.00275  &  0.00250  &  0.00263\\	
		\hline
	\end{tabular}
\end{table}

It has been observed that QPSO algorithm with Cauchy distribution and natural selection strategy is robust and obtains optimal solutions than 
PSO and QPSO, shown in Table 7. The difference between best solutions found by QPSO-CD  ($f(x)=0.00263$) and other algorithms for tension spring design problem are reported in Fig 11.

\begin{figure}[H]
	\centering
	\includegraphics[scale=0.55]{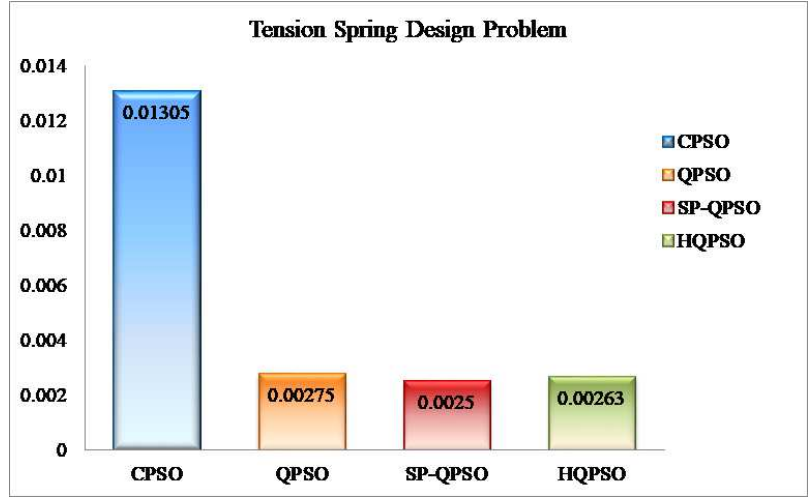}
	\caption{Results of PSO, QPSO, SP-QPSO and QPSO-CD methods for tension spring design problem}
\end{figure}

\subsection{Pressure vessel design problem}

Initially,  Kannan and Kramer \cite{kannan1994augmented} studied the pressure vessel design problem with the main aim to reduce the total fabricating cost. Pressure
vessels can be of any shape. For engineering purposes, a cylindrical design capped by hemispherical heads at both ends is widely used \cite{sandgren1990nonlinear}. Fig 12 describes the structure of pressure vessel design problem. It consists four design variables ($x_1, x_2, x_3, x_4$), where $x_1$ denotes the shell thickness $(T_s)$, $x_2$ is used for head thickness ($T_h$), $x_3$ denotes the inner radius (\textit{R}) and  $x_4$ represents the length of vessel (\textit{L}). The objective function and constraint equations are described as:
\begin{figure}[H]
	\centering
	\includegraphics[scale=0.62]{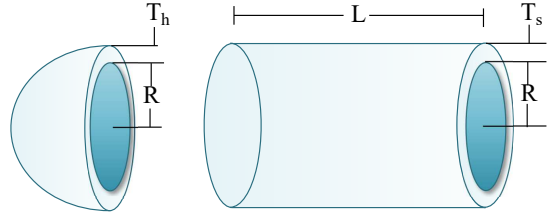}
	\caption{Design of pressure vessel}
\end{figure}

\begin{equation}
\begin{split}
\text{Min}  ~  f(x)& =  0.6224x_1x_3x_4 + 1.7781x_2x_3^{2}+  3.166x_1^{2}x_4 \\ & +19.84x_1^{2}x_3,  \\
\text{Subject to:} & \\
g_1(x)&= -x_1+0.0193x_3 \leq 0,\\
g_2(x)&=  -x_2+0.00954x_3\leq 0,\\
g_3(x)&= -\pi x_3^{2}x_4-\dfrac{4}{3}\pi x_3^{3}+1296000 \leq 0,\\
g_4(x)&= x_4-240 \leq 0,\\
\text{where} & \\
1\times 0.0625 & \leq x_1, x_2  \leq 99 \times 0.0625,  10  \leq x_3 ,x_4 \leq  200 \\
\end{split}
\end{equation}

\begin{table}[!ht]
	\centering
	\caption{Comparison of optimal results for Pressure vessel design problem}
	\begin{tabular}{c|c|c|c|c}
		\hline
		Variables & PSO & QPSO & SP-QPSO & QPSO-CD  \\
		\hline
		$x_1$ & 0.8125 &0.7783  & 0.7782   & 0.7776\\
		$x_2$&  0.4375& 0.3849 & 0.3845    & 0.3848\\
		$x_3$ & 42.0984 & 40.3289 & 40.3206 & 40.3278\\
		$x_4$ & 176.6365  & 199.8899 & 199.9988 & 199.8865 \\
		$g_1(x)$ & -4.500e-15 & 4.777e-05  & -1.242e-05  & 7.2654e-04\\
		$g_2(x)$ & -0.035880 & -1.62294e-04  & 1.58523e-04   & -7.2787e-05 \\
		$g_3(x)$ & -1.164e-10  & -97.39720071  & -63.63686942    & -0.734359\\	
		$g_4(x)$ & -63.3634 & -40.1100 &  -40.0012  & -40.1135 \\	
		$f(x)$ & 6059.714
		&5886.189  & 5885.268   & 5886.137\\	
		\hline
	\end{tabular}
\end{table}

\begin{figure}[!ht]
	\centering
	\includegraphics[scale=0.55]{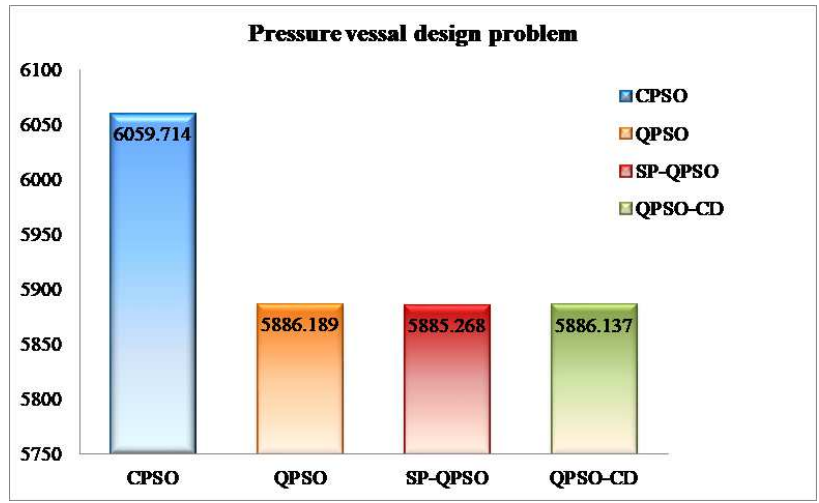}
	\caption{Optimal results of PSO, QPSO, SP-QPSO and QPSO-CD techniques for pressure vessel design problem}
\end{figure}

The optimal results of QPSO-CD is compared with the SP-QPSO, QPSO and PSO best results noted in the previous work, and are given in Table 8. The best solution obtained from QPSO-CD is  better than other algorithms as shown in Fig 13. 

\section{Conclusion}
In this paper, a new hybrid quantum particle swarm optimization algorithm is proposed with natural selection method and Cauchy distribution. The performance of the proposed algorithm is experimented on four benchmark functions and the optimal results are compared with existing algorithms. Further, the QPSO-CD is applied to solve engineering design problems. The efficiency of QPSO-CD 
is successfully presented with superiority  than preceding results
 for three engineering design problems: three-bar truss, tension/compression spring and pressure vessel. The efficiency of QPSO-CD algorithm is evaluated  by number of steps needed to reach the optimal region and proved that time complexity of proposed algorithm is lower in comparison to classical PSO.  In the context of convergence, the experimental outcomes shown that the QPSO-CD converge to get results closer to the superior solution.

\section*{Additional information}
\textbf{Competing interests:} The authors declare no competing interests.

\section*{Acknowledgement}
S.Z. acknowledges support in part from the National Natural Science
Foundation of China (Nos. 61602532),  the Natural Science Foundation of Guangdong Province of China (No. 2017A030313378), and the Science and Technology Program of Guangzhou City of China (No. 201707010194).

\end{document}